\newtheorem{theorem}{Theorem}[section]
\newtheorem{lemma}[theorem]{\TE{Lemma}}
\algnewcommand{\LineComment}[1]{\State \(\triangleright\) #1}
\newcommand*{\colorboxed}{}
\def\colorboxed#1#{%
  \colorboxedAux{#1}%
}
\newcommand*{\colorboxedAux}[3]{%
  \begingroup
    \colorlet{cb@saved}{.}%
    \color#1{#2}%
    \boxed{%
      \color{cb@saved}%
      #3%
    }%
  \endgroup
}
\def\Eqref Eq:#1:{\eqref{eq:#1}}
\newcommand{\E}[1]{\mathbf{#1}}
\newcommand{\TE}[1]{\textbf{#1}}
\newcommand{\TWOC}[2]{\left(\setlength{\arraycolsep}{1pt}\begin{array}{c}#1 \\ #2\end{array}\right)}
\newcommand{\THREE}[3]{\left(\setlength{\arraycolsep}{1pt}\begin{array}{ccc}{#1} & {#2} & {#3}\end{array}\right)}
\newcommand{\THREEC}[3]{\left(\setlength{\arraycolsep}{1pt}\begin{array}{c}#1 \\ #2 \\ #3\end{array}\right)}
\newcommand{\FOURC}[4]{\left(\setlength{\arraycolsep}{1pt}\begin{array}{c}#1 \\ #2 \\ #3 \\ #4\end{array}\right)}
\newcommand{\SIX}[6]{\left(\setlength{\arraycolsep}{1pt}\begin{array}{cccccc}{#1} & {#2} & {#3} & {#4} & {#5} & {#6}\end{array}\right)}
\newcommand{\MTT}[4]{\left(\setlength{\arraycolsep}{1pt}\begin{array}{cc}#1 & #2 \\ #3 & #4\end{array}\right)}
\newcommand{\tr}[1]{\E{tr}(#1)}
\newcommand{\fmin}[1]{\underset{#1}{\E{min}}}
\newcommand{\fmax}[1]{\underset{#1}{\E{max}}}
\newcommand{\argmin}[1]{\underset{#1}{\E{argmin}}}
\newcommand{\argminP}[1]{\E{argmin}}
\newcommand{\argmax}[1]{\underset{#1}{\E{argmax}}}
\newcommand{\argmaxP}[1]{\E{argmax}}
\newcommand{\ST}{\E{s.t.}}
\definecolor{darkgreen}{HTML}{186a3b}
\newcommand{\dd}{\E{d}}
\newcommand{\uu}{\E{u}}
\newcommand{\vv}{\E{v}}
\newcommand{\ff}{\E{f}}
\newcommand{\nn}{\E{n}}
\newcommand{\xx}{\E{x}}
\newcommand{\ww}{\E{w}}
\newcommand{\WW}{\E{W}}
\newcommand{\pxx}{\E{s}}
\newcommand{\KK}{\mathbb{K}}
\newcommand{\LL}{\mathcal{L}}
\newcommand{\CC}{\mathcal{C}}
\newcommand{\CU}{\E{C}}
\newcommand{\MM}{\mathcal{M}}
\newcommand{\MMM}{\mathbb{M}}
\newcommand{\DD}{\mathcal{D}}
\newcommand{\TT}{\mathcal{T}}
\newcommand{\NN}{\mathcal{N}}
\newcommand{\KDNode}{\mathbb{N}}
\newcommand{\SSS}{\mathcal{S}}
\newcommand{\Metric}{\mathbb{W}}
\newcommand{\hull}[1]{\E{ConvexHull}(#1)}
\newcommand{\TTMAT}{\THREE{x\xx\times}{y\xx\times}{z\xx\times}}
\newcommand{\MMMAT}{\xx\xx^T}
\newcommand{\GG}{\E{g}}
\newcommand{\GGG}{\E{G}}
\newcommand{\TR}{\E{tr}}
\newcommand{\stress}{\boldsymbol{\sigma}}
\newcommand{\deform}{\boldsymbol{\epsilon}}
\newcommand{\BEMA}{\mathcal{A}}
\newcommand{\BEMB}{\mathcal{B}}
\newcommand{\id}{\E{I}}
\newcommand{\lmt}{\lim_{\epsilon\to 0}}
\title{\large\bf Generating Optimal Grasps Under A Stress-Minimizing Metric \vspace{-10px}}
\author{Zherong Pan$^{1}$, Xifeng Gao$^{2}$, and Dinesh Manocha$^{3}$  \\
\vspace{-60px}
\thanks{$^{1}$Zherong is with Department of Computer Science, University of North Carolina at Chapel Hill. \{zherong@cs.unc.edu\}}
\thanks{$^{2}$Xifeng Gao is with Department of Computer Science, Florida State University.
\{gao@cs.fsu.edu\}}
\thanks{$^{3}$Dinesh Manocha is with Department of Computer Science and Electrical \& Computer Engineering, University of Maryland at College Park. \{dm@cs.umd.edu\}}}
\begin{document}
\maketitle
\thispagestyle{empty}
\pagestyle{empty}

\begin{abstract}
We present stress-minimizing (SM) metric, a new metric of grasp qualities. Unlike previous metrics that ignore the material of target objects, we assume that target objects are made of homogeneous isotopic materials. SM metric measures the maximal resistible external wrenches without causing fracture in the target objects. Therefore, SM metric is useful for robot grasping valuable and fragile objects. In this paper, we analyze the properties of this new metric, propose grasp planning algorithms to generate globally optimal grasps maximizing the SM metric, and compare the performance of the SM metric and a conventional metric. Our experiments show that SM metric is aware of the geometries of target objects while the conventional metric are not. We also show that the computational cost of the SM metric is on par with that of the conventional metric.
\end{abstract}
\section{\label{intro}Introduction}
The performance and properties of (asymptotically) optimal grasp planning algorithms heavily depend on the type of grasp quality metrics. A summary of these metrics can be found in \cite{roa2015grasp}. Usual requirements for a good grasp include: force closure, small contact force magnitudes, the preference of normal forces over frictional forces, higher resilience to external wrenches. The type of a metric not only reflects the requirements of an application, but also incurs different planning algorithms. For example, the $Q_{1,\infty}$ metrics are submodular, which allows fast discrete grasp point selection \cite{schulman2017grasping}. The $Q_1$ metric has an optimizable lower-bound, which allows an optimization-based grasp planning algorithm \cite{Dai2018} to jointly search for grasp points and grasp poses.

However, all the metrics considered so far take a common assumption about the target object: these objects are of infinite stiffness and will never be broken. As a result, we can assume that the target object is a rigid body and all the forces and torques are applied on the center-of-mass, which greatly simplify the computation and analysis of metrics. However, this assumption does not hold when grasping fragile objects where certain weak parts of an object should not be touched to avoid possible fractures. The grasp of fragile objects have been considered in prior works \cite{6697039,8344798}, which attempt to avoid breaking objects by developing safer grippers. Instead, we argue that, in addition to better robot hardware, a new metric is needed to guide the grasp planning algorithm, so that we can find a grasp that is most unlikely to break the object.

\TE{Main Results:} We present a novel metric that relaxes the infinite stiffness assumption and takes the material of the target object into consideration. Based on the theory of brittle fracture \cite{francfort1998revisiting}, we formulate the set of external wrenches that can be resisted without causing fractures in the object. Similar to the $Q_1$ metric \cite{219918,schulman2017grasping}, our metric then measures the size of the space of resistible wrenches. We call this new metric stress-minimizing (SM) metric $Q_{SM}$. We show that, using boundary element method (BEM) \cite{cruse1977numerical}, $Q_{SM}$ can be computed efficiently, given a closed surface triangle mesh of the object and a set of contact points. The cost of computing $Q_{SM}$ is on par with that of computing $Q_1$. In addition, we show that conventional optimization-based grasp planning algorithms \cite{schulman2017grasping} can be modified to search for globally optimal grasps maximizing $Q_{SM}$.

The rest of the paper is organized as follows. We review related work in \prettyref{sec:results} and formulation the problem of grasp planning in \prettyref{sec:problem}. The formulation of $Q_{SM}$ and its properties are summarized in \prettyref{sec:metric}. Grasp planning algorithms using $Q_{SM}$ as metric are formulated in \prettyref{sec:planning}. Finally, we compare $Q_{SM}$ with conventional metric $Q_1$ in \prettyref{sec:results}.
\section{\label{sec:related}Related Work}
We review related work in grasp quality metric, material and fracture modeling, and grasp planning.

\TE{Grasp Quality Metric:} Although some grasp planners only consider external wrenches along some certain directions \cite{mahler2016energy}, more prominent characterization of robust grasp requires resilience to external wrenches along all directions, which is known as force closure \cite{1087483}. However, infinitely many grasps can have force closure, of which good grasps are characterized by different quality metrics \cite{roa2015grasp}. Most of the metrics $Q$ are designed such that $Q>0$ implies force closure. A closely related metric to our $Q_{SM}$ is $Q_{1,\infty}$ \cite{219918}, which measures the maximal radius of origin-centered wrench-space circle contained in the convex set of resistible wrenches, under contact force magnitude constraints.

\TE{Material and Fracture Modeling:} Real world solid objects will undergo either brittle or ductile fractures depending on their materials. But modeling them are both theoretically and computationally difficult \cite{francfort1998revisiting}. Fortunately, for grasp planning, we do not need to model the deformation of objects after fractures, but only need to detect where fractures might happen. In this case, the theory of linear elasticity suffices, which is efficient to compute using finite element method (FEM) \cite{hughes2012finite} or boundary element method (BEM) \cite{cruse1977numerical}. We use BEM as our computational tool because it only requires a surface triangle mesh which is more amenable to robot grasp applications.

\TE{Grasp Planning:} Given a grasp quality metric, an (asymptotically) optimal grasp planning algorithm finds a grasp that maximizes the grasp quality metric. Early algorithms \cite{6563868} use sampling-based methods for planning. These algorithms are very general and they are agnostic to the type of grasp quality metrics. However, more efficient algorithms such as \cite{Dai2018,schulman2017grasping} can be designed if quality metrics have certain properties. Recent work \cite{inproceedingsDexNetTwo} uses a set of precomputed quality metrics to train a grasp quality function represented by a deep neural network and then uses the function as reward to optimize a grasp planner via deep reinforcement learning.
\section{\label{sec:problem}Problem Statement}
In this section, we formulate the problem of stress-minimizing grasp planning. Throughout the paper, we keep a 3D target object in its reference space, where the origin coincides with its center-of-mass. The object takes up a volume which is a closed subset $\Omega\subset R^3$. In addition, the object is under an external 6D wrench $\ww$ and a set of $N$ external contact forces $\ff_{1,\cdots,N}$ at contact points $\xx_{1,\cdots,N}$ with unit contact normals $\nn_{1,\cdots,N}$. If a grasp is valid, we have the following wrench balance condition:
\begin{equation}
\begin{aligned}
\label{eq:force_balance}
\ww=-\sum_{i=1}^N \TWOC{\ff_i}{\xx_i\times \ff_i}\quad\ST\;\|(\id-\nn_i\nn_i^T)\ff_i\|\leq\theta \nn_i^T\ff_i,
\end{aligned}
\end{equation}
where $\theta$ is the frictional coefficient. To compare the quality of two different grasps, a well-known method is to compare their $Q_1$ metric \cite{219918}, which is the maximal radius of the origin-centered inscribed sphere in the convex hull of all possible resistible external wrenches when the magnitude of $\ff_i$ is bounded. Mathematically, this is:
\begin{align*}
Q_1=&\fmax{}\;r\quad\ST\{\ww|\ww^T\Metric\ww\leq r^2\}\subseteq \\
&\{\ww|\exists\ff_{1,\cdots,N},\;\ST\prettyref{eq:force_balance},\sum_{i=1}^N\|\ff_i\|^2\leq1\},
\end{align*}
where $\Metric$ is the $6\times6$ positive semi-definite metric tensor in the wrench space.

However, this conventional formulation assumes that the object will never be broken however large the external forces are. To relax this condition, we have to make use of the numerical models of brittle fracture, e.g. \cite{francfort1998revisiting}. In these formulations, we assume that the object is made of homogeneous isotropic elastic material with $\lambda,\mu$ being its Lam\'e material parameters. When under external force fields $\GG(\xx): R^3\to R^3$, an infinitesimal displacement $\uu(\xx): R^3\to R^3$ and an stress field $\stress(\xx): R^3\to R^{3\times3}$ will occur $\forall \xx\in\Omega$. $\uu(\xx),\stress(\xx)$ can be computed from $\GG(\xx)$ using the force balance condition:
\begin{equation}
\begin{aligned}
\label{eq:elasticity}
\forall \xx\in\Omega:\quad
&\deform=(\nabla \uu+\nabla \uu^T)/2 \\
&\stress=2\mu\deform + \lambda\TR(\deform)\id    \\
&\nabla\cdot\stress+\GG=0   \\
\forall \xx\in\partial\Omega:\quad
&\nn(\xx)\cdot\stress+\sum_{i=1}^N\delta(\xx-\xx_i)\ff_i=0,
\end{aligned}
\end{equation}
where we assume the boundary of $\Omega$ is almost everywhere smooth with unit outward normal defined as $\nn(\xx)$ and $\delta$ is the Dirac's delta operator. Classical theory of brittle fracture further assumes that there exists a tensile stress $\stress_{max}$ and brittle fractures will not happen if the following condition holds:
\begin{align}
\label{eq:safe_condition}
\forall \|\dd\|=1, \xx\in\Omega:\; -\stress_{max}\leq\dd^T\stress(\xx)\dd\leq\stress_{max},
\end{align}
note that the stress tensor must be symmetric so that its singular values coincide with its eigenvalues. Given this condition, there are two goals of this paper:
\begin{itemize}
    \item Propose a grasp quality metric $Q_{SM}$ that measures the quality of grasps with \prettyref{eq:safe_condition} as precondition (\prettyref{sec:metric}).
    \item Propose grasp planning algorithms that generate grasps maximizing $Q_{SM}$ (\prettyref{sec:planning}).
\end{itemize}

\begin{figure*}[ht]
\centering
\includegraphics[width=0.98\textwidth]{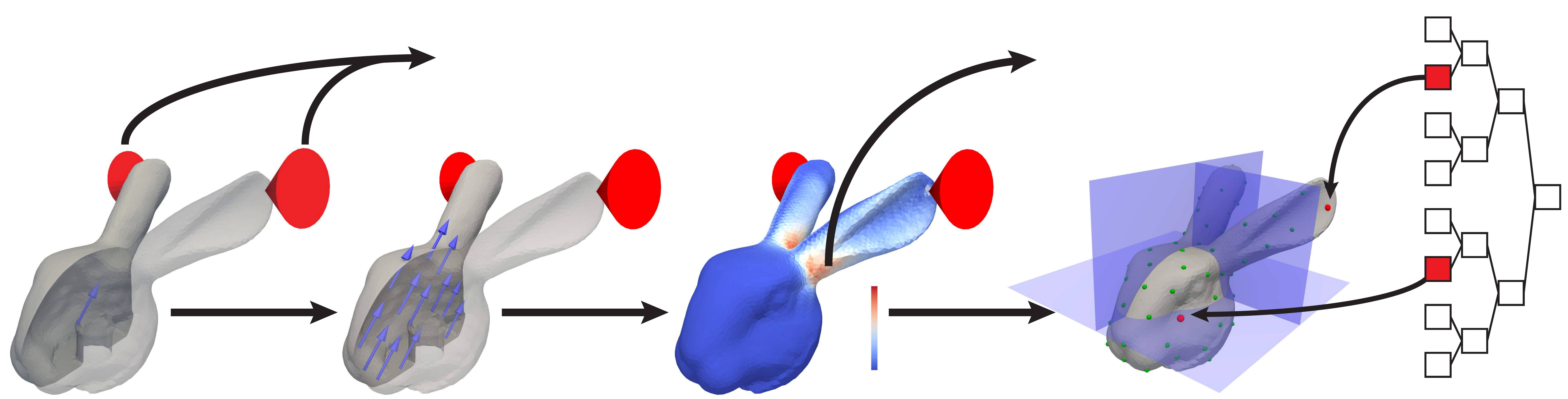}
\put(-480,60){(a)}
\put(-375,60){(b)}
\put(-270,60){(c)}
\put(-165,60){(d)}
\put(-20 ,5 ){(e)}
\put(-440,15){\prettyref{eq:gg_from_w}}
\put(-335,15){\prettyref{eq:elasticity}}
\put(-212,15){\prettyref{pb:planning_relaxed}}
\put(-350,105){$Q_1: \sum_{i=1}^N\|\ff_i\|^2\leq1$}
\put(-163,105){$Q_{SM}: |\dd^T\stress\dd|\leq\stress_{max}$}
\vspace{-0px}
\caption{\label{fig:pipeline} Illustration of our method where the target object is a bunny head and the two-point grasp has contact points on the ears of bunny (frictional cones in red). (a): When the bunny head is under external wrench (blue arrow), $Q_1$ metric assumes that the wrench is applied on the center-of-mass. $Q_1$ further assumes that the magnitude of force is bounded. (b): Our $Q_{SM}$ metric assumes that the external wrench is applied as a body force field (blue arrows). (c): We use BEM to solve for a surface stress field (color coded on surface, highest stress around connections between the ears and the head). $Q_{SM}$ assumes that the stress along any direction $\E{d}$ is smaller than tensile stress $\stress_{max}$. (d): When performing grasp planning, we first construct a KD-tree (transparent blue planes) for the set of $N$ potential contact points. (e): We select $C$ contact points (red points) by descending the tree. This is equivalent to a mixed-integer programming problem with $\E{log}_2(N)C$ binary decision variables.}
\vspace{-10px}
\end{figure*}
\section{\label{sec:metric}The Stress-Minimizing Metric $Q_{SM}$}
Our construction is illustrated in \prettyref{fig:pipeline}. The basic idea behind the construction of $Q_{SM}$ is very similar with that of the $Q_1$ metric. Intuitively, we first define a convex subset $\WW$ of the 6D wrench space, which contains resistible wrenches that does not violate \prettyref{eq:safe_condition}. We then define $Q_{SM}$ as the maximal radius of the origin-centered sphere contained in $\WW$.

\subsection{Definition of $\WW$ and $Q_{SM}$}
Given a certain wrench $\ww$, we need to determine whether $\ww\in\WW$. This can be done by first computing $\stress$ and then testing whether \prettyref{eq:safe_condition} holds. However, $\stress$ is computed from $\GG$ but not $\ww$, so that we need to find a relationship between $\GG$ and $\ww$. In other words, we need to find a body force distribution such that the net effect of $\GG$ is equivalent to applying $\ww$ on the center-of-mass. Obviously, infinitely many $\GG$ will satisfy this relationship and different choices of $\GG$ will lead to different variants of $Q_{SM}$ metrics. In this paper, we propose to choose $\GG$ as a linear function in $\xx$. The most important reason behind this choice is that the computation of $\stress$ can be accomplished using BEM if $\GG$ is a harmonic function of $\xx$. Under this choice, we have: $\GG(\xx)=\GG_0+\nabla\GG\xx$, where $\GG_0$ is the constant term, $\nabla\GG$ is the constant spatial derivative tensor. Clearly $\GG(\xx)$ has 12 degrees of freedom and we can solve for $\GG_0$ and $\nabla\GG$ to equate the effect of $\GG$ and $\ww$ as follows:
\begin{align*}
\ww&=\int_\Omega\TWOC{\GG}{\xx\times\GG}d\xx=
\TWOC{|\Omega|\GG_0}{\TT\THREEC{{[\nabla\GG]}_x}{{[\nabla\GG]}_y}{{[\nabla\GG]}_z}}    \\
\TT&\triangleq\int_\Omega\TTMAT d\xx,
\end{align*}
where ${[\nabla\GG]}_{x,y,z}$ are the first, second, and third column of $\nabla\GG$, respectively. However, there are 9 degrees of freedom in $\GG_\xx$ but only 3 constraints, so that we have to solve for $\GG_\xx$ in a least square sense:
\begin{align*}
\GG_\xx=\argmin{\GG_\xx}\;\int_\Omega\|\GG_\xx\xx\|^2d\xx\quad
\ST\;\TT\THREEC{{[\nabla\GG]}_x}{{[\nabla\GG]}_y}{{[\nabla\GG]}_z}=\THREEC{\ww_4}{\ww_5}{\ww_6},
\end{align*}
the solution of which can be computed analytically. In summary, we have:
\begin{align}
\label{eq:gg_from_w}
\FOURC{\GG_0}{{[\nabla\GG]}_x}{{[\nabla\GG]}_y}{{[\nabla\GG]}_z}&=
\MTT{\frac{\id}{|\Omega|}}{}{}{\MM^{-1}\TT^T\left[\TT\MM^{-1}\TT^T\right]^{-1}}\ww
\end{align}
\begin{align*}
\MM&\triangleq\left[\int_\Omega\MMMAT d\xx\right]\otimes\id,
\end{align*}
where $\otimes$ denotes Kronecker product. The matrices $\TT,\MM$ are constants and can be precomputed from the shape of the target object, or more specifically, from the inertia tensor. Given these definitions, we can now define $\WW$ as follows:
\begin{align}
\WW=\{\ww|\exists\GG,\ff_{1,\cdots,N},\uu,\deform,\stress,\;\ST\;
\prettyref{eq:force_balance},\ref{eq:elasticity},\ref{eq:safe_condition},\ref{eq:gg_from_w}\}.
\end{align}
Finally, we are ready to give a mathematical definition of $Q_{SM}$ as the following optimization problem:
\begin{align*}
Q_{SM}=\fmax{}\;r\quad
\ST\;\{\ww|\ww^T\Metric\ww\leq r^2\}\subseteq\WW.
\end{align*}

From the mathematical definition of $Q_{SM}$, we immediately have the following properties of $\WW$:
\begin{lemma}
$\WW$ is a convex set.
\end{lemma}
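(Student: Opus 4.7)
The plan is to verify convexity directly from the definition: given two wrenches $\ww_1,\ww_2\in\WW$ with their respective certificates $(\GG_j,\ff_j^1,\dots,\ff_j^N,\uu_j,\deform_j,\stress_j)$ for $j=1,2$, I will show that the pointwise convex combination of these certificates serves as a valid certificate for $\alpha\ww_1+(1-\alpha)\ww_2$ for every $\alpha\in[0,1]$. Since the contact points $\xx_i$ and outward normals $\nn_i$ are fixed by the grasp (not part of the certificate), the combined $\ff_i$'s are attached to the same geometry, and all four constraints in the definition of $\WW$ can be checked one by one.

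The key observation is that every defining constraint of $\WW$ is either affine or a convex inequality. First, \prettyref{eq:gg_from_w} is a linear map $\ww\mapsto(\GG_0,\nabla\GG)$, so $\alpha\GG_1+(1-\alpha)\GG_2$ is exactly the body force associated with $\alpha\ww_1+(1-\alpha)\ww_2$. Second, the elasticity system \prettyref{eq:elasticity} is linear in the tuple $(\uu,\deform,\stress,\GG,\ff_i)$ (strain-displacement, Hooke's law, interior divergence, and boundary traction are all linear relations), so the convex combination satisfies it in the same function space. Third, the wrench balance in \prettyref{eq:force_balance} is linear in $(\ww,\ff_i)$, and the friction constraint $\|(\id-\nn_i\nn_i^T)\ff_i\|\leq\theta\,\nn_i^T\ff_i$ is a second-order cone constraint, hence a convex cone in $\ff_i$; convex combinations therefore preserve both.

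Finally, for \prettyref{eq:safe_condition}, the crucial fact is that for any fixed unit $\dd$ and $\xx\in\Omega$ the map $\stress\mapsto \dd^T\stress(\xx)\dd$ is linear, so $\{\stress:-\stress_{max}\leq \dd^T\stress\dd\leq\stress_{max}\}$ is an intersection of two half-spaces. Intersecting over all unit $\dd$ and all $\xx\in\Omega$ preserves convexity, so $\alpha\stress_1+(1-\alpha)\stress_2$ still satisfies the bound. Combining the four checks, the convex combination of the certificates witnesses $\alpha\ww_1+(1-\alpha)\ww_2\in\WW$, which is the desired conclusion.

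I do not expect any real obstacle: the lemma is essentially a bookkeeping statement saying that each ingredient entering the definition of $\WW$ is preserved under linear combination. The only mildly non-trivial point is handling the friction inequality, which is resolved by noting that second-order cones are convex; once that is in place, the proof is a short verification rather than a calculation.
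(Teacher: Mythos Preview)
Your proposal is correct and follows essentially the same approach as the paper: both identify \prettyref{eq:force_balance} as a second-order (frictional) cone, \prettyref{eq:elasticity} and \prettyref{eq:gg_from_w} as linear constraints, and \prettyref{eq:safe_condition} as a convex (PSD-type) constraint, concluding that $\WW$ is convex. Your explicit construction of a convex combination of certificates is just the unpacked form of the paper's one-line ``intersection of convex sets'' argument (and in fact is slightly more careful, since $\WW$ is really a projection of that intersection onto the $\ww$-coordinates).
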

\begin{proof}
\prettyref{eq:force_balance} is a set of quadratic cone constraints, which defines a convex set. \prettyref{eq:elasticity} is a set of infinite-dimensional linear constraints, which defines a convex set. \prettyref{eq:safe_condition} is an infinite-dimensional PSD-cone constraint, which defines a convex set. Finally, \prettyref{eq:gg_from_w} is a linear constraint, which also defines a convex set. As the intersection of convex sets, $\WW$ is convex.
\end{proof}
\begin{lemma}
\label{Lem:compact}
$\WW$ is a compact set so that $Q_{SM}$ is finite.
\end{lemma}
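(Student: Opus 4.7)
The plan is to establish compactness by proving $\WW$ is both closed and bounded in $\R^6$; finiteness of $Q_{SM}$ then follows at once, since $Q_{SM}$ is the largest radius of a $\Metric$-ball fitting inside a bounded set. The two parts need different treatments: closedness is essentially a continuity-and-limit argument once the witnesses are bounded, while boundedness is the substantive step.

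For closedness I would take $\ww_n\to\ww^*$ with each $\ww_n\in\WW$ and construct a witness for $\ww^*$. By \prettyref{eq:gg_from_w} the body-force field $\GG^n$ is a continuous linear function of $\ww_n$, hence $\GG^n\to\GG^*$. The boundedness argument below, applied uniformly on the precompact set $\{\ww_n\}\cup\{\ww^*\}$, yields a uniform bound on $\|\ff_i^n\|$; extracting a subsequence then produces $\ff_i^*$, and linearity of \prettyref{eq:elasticity} delivers a limit stress field $\stress^*$. The friction cones in \prettyref{eq:force_balance} and the pointwise PSD-cone condition \prettyref{eq:safe_condition} are closed under uniform limits, so $\ww^*\in\WW$.

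For boundedness I would argue by homogeneity. Define
\begin{align*}
h(\ww)\triangleq\inf\bigl\{\,\sup_{\xx\in\Omega,\,\|\dd\|=1}|\dd^T\stress(\xx)\dd|\,:\,(\GG,\ff_{1,\cdots,N},\stress)\text{ is a valid witness for }\ww\,\bigr\},
\end{align*}
so that $\WW=\{\ww:h(\ww)\leq\stress_{max}\}$. Scaling any witness of $\ww$ by $\alpha\geq0$ gives a witness of $\alpha\ww$ (linearity of \prettyref{eq:force_balance}, \prettyref{eq:elasticity}, \prettyref{eq:gg_from_w} and positive homogeneity of the friction cones), so $h(\alpha\ww)=\alpha h(\ww)$; convex-combining witnesses shows $h$ is convex by the same recipe already used in the convexity lemma. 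Hence, once I establish a uniform lower bound $h(\hat\ww)\geq c_0>0$ over the unit sphere, positive homogeneity will give $\|\ww\|\leq\stress_{max}/c_0$ for every $\ww\in\WW$, which is the desired boundedness.

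The main obstacle is this uniform positivity of $h$. Pointwise positivity $h(\ww)>0$ for $\ww\neq0$ would come from \prettyref{eq:elasticity}: the $12\times 6$ matrix in \prettyref{eq:gg_from_w} has full column rank, so $\ww\neq0$ forces a nonzero affine field $\GG(\xx)=\GG_0+\nabla\GG\,\xx$, which cannot vanish on any open subset of $\Omega$; if $h(\ww)=0$ then a sequence of witnesses produces $\stress^k\to0$ uniformly, hence $\nabla\cdot\stress^k\to0$ distributionally, whereas the interior equation pins $\nabla\cdot\stress^k=-\GG$ for every $k$, contradicting $\GG\not\equiv0$. Upgrading this pointwise bound to a uniform constant $c_0$ over the unit sphere should follow from continuity of the convex function $h$ on its finite domain combined with compactness of the sphere; the delicate part is verifying continuity up to the boundary of the effective domain of $h$, which may require a separate argument when the grasp fails to achieve force closure along some direction.
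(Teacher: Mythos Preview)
Your route---closed plus bounded, with boundedness via homogeneity and a nonzero-stress argument---is exactly the paper's, though the paper compresses it to three lines and simply declares closedness ``obvious.'' Your boundedness treatment is in fact more careful than the paper's: the paper fixes one witness for $\ww\neq 0$, notes its stress is not identically zero, produces an $\epsilon>0$, and scales by $\alpha>\stress_{max}/\epsilon$, without ever confronting the existential quantifier in the definition of $\WW$ (a different witness might serve $\alpha\ww$). Your function $h$ and the distributional argument for $h(\ww)>0$ are precisely what plugs that hole.

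Your residual worry about upgrading pointwise positivity to a uniform $c_0>0$ on the sphere is unnecessary, and you should not try to get it from continuity of $h$. Since $\WW$ is convex (previous lemma), closed, and contains the origin, it is bounded iff its recession cone is $\{0\}$, i.e., iff it contains no ray $\{t\dd:t\geq 0\}$; positive homogeneity of $h$ together with the pointwise fact $h(\dd)>0$ already excludes every such ray, so the continuity-of-$h$ discussion can be dropped entirely. One caution on your closedness sketch: the step ``the boundedness argument below \ldots\ yields a uniform bound on $\|\ff_i^n\|$'' is not right as stated---that argument bounds $\ww$, not the contact forces. What you actually need is that bounded stress forces bounded $\ff_i$ through the boundary traction relation (in the discretized setting this amounts to the stacked $\BEMB$ matrix having full column rank), which is a separate fact to verify and is exactly the kind of thing the paper sweeps under ``obviously closed.''
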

\begin{proof}
For any $\ww\neq0$, $\stress$ that satisfies \prettyref{eq:force_balance}, \prettyref{eq:elasticity}, and \prettyref{eq:gg_from_w} cannot be uniformed zero. In other words, there exists $\dd$ and $\xx$ such that $\dd^T\uu(\xx)\dd>\epsilon>0$. If we multiply $\ww$ by $\alpha>\stress_{max}/\epsilon$, \prettyref{eq:safe_condition} will be violated so that $\alpha\ww\notin\WW$. Therefore, $\WW$ is bounded and is obviously closed, so that $\WW$ is compact and $Q_{SM}$ is finite.
\end{proof}

In addition, the following property of $Q_{SM}$ is obvious:
\begin{lemma}
$Q_{SM}>0$ implies force closure.
\end{lemma}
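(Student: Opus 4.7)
The plan is to unfold the definition of $Q_{SM}>0$ and then use the fact that the friction cone constraint is positively homogeneous. First, $Q_{SM}>0$ means that the set $\{\ww\,|\,\ww^T\Metric\ww\leq Q_{SM}^2\}$ is a nondegenerate $\Metric$-ball contained in $\WW$. In particular, for every nonzero direction $\hat\ww\in\R^6$ there exists a positive scalar $r$ such that $r\hat\ww\in\WW$. By the definition of $\WW$, there exist $\GG,\ff_{1,\cdots,N},\uu,\deform,\stress$ satisfying all of \prettyref{eq:force_balance}, \prettyref{eq:elasticity}, \prettyref{eq:safe_condition}, and \prettyref{eq:gg_from_w} for the wrench $r\hat\ww$. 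Discarding everything except the contact forces, we obtain $\ff_{1,\cdots,N}$ that lie in their respective friction cones and whose net wrench (via \prettyref{eq:force_balance}) equals $r\hat\ww$.

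Next, I would argue that any wrench of the form $t\hat\ww$ with $t>0$ is likewise balanced by $t\ff_i/r$. This uses two observations: the wrench-balance equation in \prettyref{eq:force_balance} is linear in $\ff_i$, and the friction-cone inequality $\|(\id-\nn_i\nn_i^T)\ff_i\|\leq\theta\nn_i^T\ff_i$ is invariant under multiplication of $\ff_i$ by any positive scalar. Therefore the set of resistible wrenches (ignoring \prettyref{eq:safe_condition}) is a cone that contains a full $\Metric$-ball about the origin; being a cone over a full ball, it equals $\R^6$. This is precisely force closure.

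There is no serious obstacle: the only thing one must notice is that the argument does not need \prettyref{eq:safe_condition} at all. The stress constraint is what prevents $\WW$ itself from being a cone (and in fact forces $\WW$ to be compact, by \prettyref{Lem:compact}); but since force closure is a statement about the existence of contact forces rather than about stresses, we are free to drop the brittle-fracture certificate after extracting $\ff_i$ and simply rescale. Thus the claim reduces to the standard observation that $Q_1>0$ implies force closure, applied after projecting $\WW$ onto its contact-force slice.
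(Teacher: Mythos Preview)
Your argument is correct. The paper itself offers no proof of this lemma at all---it merely prefaces the statement with ``the following property of $Q_{SM}$ is obvious''---so your proposal supplies exactly the detail the authors omit, and the cone-rescaling argument you give is the natural (indeed essentially the only) way to make the implication precise.
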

And the following property has been proved in \cite{schulman2017grasping} for $Q_1$ and also holds for $Q_{SM}$ by a similar argument:
\begin{lemma}
\label{Lem:opt_formulation}
$Q_{SM}=\fmin{\dd,\|\dd\|=1}\;\fmax{\ww\in\sqrt{\Metric}\WW}\;\ww^T\dd$.
\end{lemma}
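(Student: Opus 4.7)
The plan is to reduce the claim to the standard support-function characterization of the inradius of a closed convex body. First I would make the change of variables $\vv=\sqrt{\Metric}\ww$, which turns the metric ball $\{\ww|\ww^T\Metric\ww\leq r^2\}$ into the Euclidean ball $\{\vv\,|\,\|\vv\|\leq r\}$ and turns $\WW$ into the image $\sqrt{\Metric}\WW$. Since $\sqrt{\Metric}$ is linear and invertible (we can assume $\Metric$ is positive definite, otherwise restrict to its range), the containment $\{\ww|\ww^T\Metric\ww\leq r^2\}\subseteq\WW$ is equivalent to $rB\subseteq\sqrt{\Metric}\WW$, where $B$ is the Euclidean unit ball. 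Hence $Q_{SM}$ is exactly the largest radius $r$ such that $rB\subseteq\sqrt{\Metric}\WW$, i.e.\ the Euclidean inradius of $\sqrt{\Metric}\WW$ about the origin.

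Next I would invoke the standard fact that for a closed convex set $K\subseteq R^6$ containing the origin, one has $rB\subseteq K$ if and only if the support function satisfies $h_K(\dd)\triangleq\max_{\ww\in K}\ww^T\dd\geq r$ for every unit vector $\dd$. The ``only if'' direction is immediate by picking $\ww=r\dd\in K$. The ``if'' direction follows from the separating hyperplane theorem: if some $r\dd_0\notin K$ with $\|\dd_0\|=1$, then there is a supporting hyperplane with outward normal $\dd_0$ (after renormalization) separating $r\dd_0$ from $K$, giving $h_K(\dd_0)<r$. Taking the supremum over admissible $r$ then gives that the inradius equals $\min_{\|\dd\|=1}h_K(\dd)$.

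To apply this to $K=\sqrt{\Metric}\WW$, I need to check the hypotheses. Convexity of $\WW$ (hence of $\sqrt{\Metric}\WW$) was established in the previous lemma. Closedness follows from Lemma \ref{Lem:compact}, which shows $\WW$ is compact, so $\sqrt{\Metric}\WW$ is compact as well and the support-function $\max$ is attained. Finally, $0\in\WW$ since the trivial solution $\ff_i=\uu=\deform=\stress=\GG=0,\ww=0$ satisfies equations \prettyref{eq:force_balance}, \prettyref{eq:elasticity}, \prettyref{eq:safe_condition}, and \prettyref{eq:gg_from_w}.

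Combining these observations yields
\[
Q_{SM}=\sup\{r\geq 0\mid rB\subseteq\sqrt{\Metric}\WW\}=\min_{\|\dd\|=1}\max_{\ww\in\sqrt{\Metric}\WW}\ww^T\dd,
\]
which is the stated identity. The only subtle step is the separation-based direction of the support-function characterization; everything else (change of variables, compactness, convexity) is routine once the earlier lemmas are in hand, so I do not anticipate a serious obstacle.
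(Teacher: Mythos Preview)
Your argument is correct and is precisely the standard support-function characterization of the inradius that the paper has in mind; the paper itself supplies no proof of \prettyref{Lem:opt_formulation}, merely citing \cite{schulman2017grasping} and asserting that the same argument for $Q_1$ carries over to $Q_{SM}$. One minor slip in wording: when $r\dd_0\notin K$, the separating hyperplane need not have outward normal $\dd_0$---it has some unit normal $\nn$ with $h_K(\nn)<\nn^T(r\dd_0)\leq r\|\nn\|\|\dd_0\|=r$---but this is exactly the inequality you need, so the argument goes through unchanged.
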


\subsection{Discretization of $Q_{SM}$}
The computation of exact $Q_{SM}$ is impossible because it involves infinite dimensional tensor fields: $\stress,\deform$, so that we have to discretize them using conventional techniques such as FEM \cite{hughes2012finite} or BEM \cite{cruse1977numerical}. In comparison, FEM is mathematically simpler but requires a volumetric mesh of the target object, while BEM only requires a surface triangle mesh. We provide the detailed derivation of BEM in \prettyref{appen:BEM} and summarize the main results here. Our BEM implementation approximates the stress field $\stress(\xx)$ to be piecewise constant on each triangular patch of the surface. Assuming that the target object has $K$ surface triangles whose centroids are: $\xx_{1,\cdots,K}$, we have $K$ different stress values:
\begin{align}
\label{eq:BEM}
\THREEC{\stress_x(\xx_j)}{\stress_y(\xx_j)}{\stress_z(\xx_j)}=
\BEMA_j\FOURC{\GG_0}{{[\nabla\GG]}_x}{{[\nabla\GG]}_y}{{[\nabla\GG]}_z}+
\BEMB_j\THREEC{\ff_1}{\vdots}{\ff_N}\quad\forall j=1,\cdots,K,
\end{align}
where $\BEMA,\BEMB$ are dense coefficient matrices defined from BEM discretization, the definitions of which can be found from \prettyref{appen:BEM}. Note that computing the coefficients of these two matrices are very computationally costly, where a naive implementation of BEM requires $\mathcal{O}(K^3)$ operations and acceleration techniques such as the H-matrix \cite{hackbusch1999sparse} can reduce this cost to $\mathcal{O}(K\E{log}^2K)$ operations. However, these two matrices are constant and can be precomputed for a given target object shape, so that the the cost of BEM computation is not a part of grasp planning. After discretization, we arrive at the finite-dimensional version of fracture condition:
\begin{align}
\label{eq:safe_condition_discrete}
\forall \|\dd\|=1, j=1,\cdots,K:\; -\stress_{max}\leq\dd^T\stress(\xx_j)\dd\leq\stress_{max}.
\end{align}
finite-dimensional version of $\bar{\WW}$:
\begin{align*}
\bar{\WW}\triangleq\{\ww|\exists\GG,\ff_{1,\cdots,N},\uu,\deform,\stress,\;\ST\;
\prettyref{eq:force_balance},\ref{eq:BEM},\ref{eq:safe_condition_discrete},\ref{eq:gg_from_w}\},
\end{align*}
and finite-dimensional version of $\bar{Q}_{SM}$ defined as:
\begin{align*}
\bar{Q}_{SM}=\argmax{}\;r\quad
\ST\;\{\ww|\ww^T\Metric\ww\leq r^2\}\subseteq\bar{\WW}.
\end{align*}
All the properties of the infinite-dimensional $\WW$ and $Q_{SM}$ hold for the finite-dimensional version $\bar{\WW}$ and $\bar{Q}_{SM}$ by a similar argument. 
\subsection{Computation of $\bar{Q}_{SM}$}
Even after discretization, computing $\bar{Q}_{SM}$ is costly and non-trivial. According to \prettyref{Lem:opt_formulation}, the equivalent optimization problem for $\bar{Q}_{SM}$ is:
\begin{align*}
\bar{Q}_{SM}=\fmin{\dd,\|\dd\|=1}\;\fmax{\ww\in\sqrt{\Metric}\bar{\WW}}\;\ww^T\dd,
\end{align*}
which is non-convex optimization, so that direct optimization does not give the global optimum. In this section, we modify three existing algorithms to (approximately) compute $\bar{Q}_{SM}$. Two discrete algorithms have been proposed in \cite{zheng2012efficient,schulman2017grasping} to approximate $Q_1$. Due to the close relationships between $Q_1$ and $\bar{Q}_{SM}$, we show that these two algorithms can be modified to compute $\bar{Q}_{SM}$.

In \cite{schulman2017grasping}, the space of unit vectors is discretized into a finite set of $D$ directions: $\dd_{1,\cdots,D}$. As a result, we can compute an upper bound for $\bar{Q}_{SM}$ as:
\begin{align*}
\bar{Q}_{SM}\leq\fmin{j=1,\cdots,D}\;\fmax{\ww\in\sqrt{\Metric}\bar{\WW}}\;\ww^T\dd_j.
\end{align*}
We can make this upper bound arbitrarily tight by increasing $D$. In another algorithm \cite{zheng2012efficient}, a convex polytope $\CC\subseteq\bar{\WW}$ is maintained using H-representation \cite{grunbaum1969convex} and we can compute a lower bound for $\bar{Q}_{SM}$ as:
\begin{align}
\label{pb:lower_bound}
\bar{Q}_{SM}\geq\fmin{\dd,\|\dd\|=1}\;\fmax{\ww\in\sqrt{\Metric}\CC}\;\ww^T\dd.
\end{align}
The global optimum of \prettyref{pb:lower_bound} is easy to compute from an H-representation of $\CC$ by computing the distance between the origin and each face of $\CC$. This lower bound can be iteratively tightened by first computing the blocking face normal $\dd$ of $\CC$ and then expanding $\CC$ via:
\begin{align*}
\CC\gets\hull{\CC\cup\left\{\argmax{\ww\in\sqrt{\Metric}\bar{\WW}}\;\ww^T\dd\right\}}.
\end{align*}
These two algorithms can be realized if we can find the supporting point of $\sqrt{\Metric}\bar{\WW}$, which amounts to the following conic programming problem:
\begin{equation}
\begin{aligned}
\label{pb:support}
\argmax{\ww,\ff_i,\stress(\xx_j)}\;&\ww^T\sqrt{\Metric}\dd  \\
\ST\;&\prettyref{eq:force_balance},\ref{eq:BEM},\ref{eq:gg_from_w}  \\
&-\stress_{max}\id\preceq\stress(\xx_j)\preceq\stress_{max}\id
\quad\forall j=1,\cdots,K.
\end{aligned}
\end{equation}
The conic programming reformulation in \prettyref{pb:support} can be solved using the interior point method \cite{andersen2000mosek}. Given this solution procedure, we summarize the modified version of \cite{schulman2017grasping} in \prettyref{Alg:UpperBound} and modified version of \cite{zheng2012efficient} in \prettyref{Alg:LowerBound}. Note that \prettyref{Alg:LowerBound} is advantageous over \prettyref{Alg:UpperBound} in that it can approximate $\bar{Q}_{SM}$ up to arbitrary precision $\epsilon$, so we always use \prettyref{Alg:LowerBound} in the rest of the paper. 

Compared with $Q_1$ metric, a major limitation of using $\bar{Q}_{SM}$ metric is that the computational cost is much higher. Note that the computational cost of solving \prettyref{pb:support} is at least linear in $K$ and can be superlinear depending on the type of conic programming solver used. This $K$ is the number of surface triangles on the target object, which can easily reach several thousands. Fortunately, we can drastically reduce this cost by using a progressive approach.

\subsection{Performance Optimization}
The naive execution of \prettyref{Alg:LowerBound} can be prohibitively slow due to the repeated solve of \prettyref{pb:support}. The conic programming problem has $K$ PSD-cone constraints with $K$ being several thousands. Solving \prettyref{pb:support} using interior point method \cite{andersen2000mosek} involves repeated solving a sparse linear system with size proportional to $K$. We propose a method that can greatly improve the performance when solving \prettyref{pb:support}. Our idea is that when the global optimum of \prettyref{pb:support} is reached, more than $99\%$ of the $K$ PSD-cone constraints are inactive, so that removing these constraints do not alter the solution. This idea is inspired by \cite{Zhou:2013:WSA:2461912.2461967} which shows that, empirically, maximal stress only happens on a few sparse points on the surface of the target object. However, we do not know the active constraints as a prior. Therefore, we propose to progressively detect these active constraints.

To do so, we first select a subset $\KK\subset\{1,\cdots,K\}$ such that $|\KK|\ll K$ and $\{\stress(\xx_i)|i\in\KK\}$ are the stresses that are most likely to be violated. To select this set $\KK$, we use a precomputation step and solve \prettyref{pb:support} for $S$ times using random $\dd$, and record which PSD-cones are active. For each PSD-cone, we maintain how many times they become active during the $S$ solves of \prettyref{pb:support}. We then select the most frequent $|\KK|$ PSD-cones to form $\KK$. After selecting $\KK$, we maintain an active set $\SSS$ which initializes to $\KK$ and we solve \prettyref{pb:support} using constraints only in $\SSS$, which is denoted by:
\begin{equation}
\begin{aligned}
\label{pb:support_cons}
\argmax{\ww,\ff_i,\stress(\xx_j)}\;&\ww^T\sqrt{\Metric}\dd  \\
\ST\;&\prettyref{eq:force_balance},\ref{eq:BEM},\ref{eq:gg_from_w}  \\
&-\stress_{max}\id\preceq\stress(\xx_j)\preceq\stress_{max}\id
\quad\forall j\in\SSS.
\end{aligned}
\end{equation}
After we solve for the global optimum of \prettyref{pb:support_cons}, we check the stress of remaining constraint points and we pick the most violated constraint:
\begin{align}
\label{eq:most_violated}
j^*=\argmax{j\in\{1,\cdots,K\}/\SSS}\sqrt{\|\stress(\xx_j)\stress(\xx_j)\|_2}.
\end{align}
If we have $\sqrt{\|\stress(\xx_{j^*})\stress(\xx_{j^*})\|_2}<\stress_{max}$, then \prettyref{pb:support_cons} and \prettyref{pb:support} will return the same solution. Otherwise, we add $j^*$ to $\SSS$. This method is summarized in \prettyref{Alg:progressive} and is guaranteed to return the same global optimum of \prettyref{pb:support}. In practice, \prettyref{Alg:progressive} is orders of magnitude more efficient than throwing all constraints to the interior point method at once.

\section{\label{sec:planning}Grasp Planning Under The SM Metric}
Built on top of the computational procedure of $\bar{Q}_{SM}$, we can solve the optimal grasp point selection problem. Given a set of $N$ potential grasp points sampled on $\partial\Omega$, we want to select $C$ points. Mathematically, we want to solve the following mixed-integer optimization problem using branch-and-bound (BB) algorithm \cite{clausen1999branch}:
\begin{equation}
\begin{aligned}
\label{pb:planning}
\argmax{z_i}\;&\bar{Q}_{SM}   \\
    \ST\;&z_i\in\{0,1\}\quad\forall i=1,\cdots,N \\
              &\|\ff_i\|^2\leq z_iM\wedge
              \sum_{i=1}^N z_i\leq C,
\end{aligned}
\end{equation}
where $M$ is the big-M constant that can be arbitrarily large. Note that the algorithm to solve \prettyref{pb:planning} can be different depending on what algorithm we use to (approximately) compute $\bar{Q}_{SM}$. If approximate \prettyref{Alg:UpperBound} is used, then the problem can be efficiently solved using the sub-modular coverage algorithm \cite{schulman2017grasping}. However, in order to highlight the advantage of our new metric, we would like to compute $\bar{Q}_{SM}$ accurately using \prettyref{Alg:LowerBound}. In \prettyref{sec:opt_planning}, we show that \prettyref{pb:planning} can be solved alot more efficiently using a series of reformulations, without changing the global optimum.

\subsection{Optimized BB Algorithm \label{sec:opt_planning}}
Our key innovation is via the following reformulation:
\begin{equation}
\begin{aligned}
\label{pb:planning_relaxed}
\argmax{z_i^j}\;&\bar{Q}_{SM}   \\
    \ST\;&z_i^j\in\textcolor{red}{[0,1]}\quad\forall i=1,\cdots,N\; j=1,\cdots,C \\
              &\|\ff_i\|^2\leq \sum_{j=1}^Cz_i^jM\wedge
              \{z_{1,\cdots,N}^j\}\in\textcolor{red}{\mathcal{SOS}_1}   \\
              &z_i^j\leq z_i^{j+1}\quad\forall j=1,\cdots,C-1,
\end{aligned}
\end{equation}
where $\mathcal{SOS}_1$ is the special-ordered-set-of-type-1 \cite{vielma2011modeling}, which constraints that only one number in a set can take non-zero value. According to \cite{vielma2011modeling}, we know that \prettyref{pb:planning} requires $N$ binary variables while \prettyref{pb:planning_relaxed} requires $\E{log}_2(N)C$ binary variables, which is much fewer as $C\ll N$. Intuitively, \prettyref{pb:planning_relaxed} build a binary bounding volume hierarchy for the set of $N$ contact points and introduce one binary decision variable for each internal level of the tree to select whether the left or the right child is selected. After a leaf node is reached, a contact point is selected. Finally, the last constraint in \prettyref{pb:planning_relaxed}, $z_i^j\leq z_i^{j+1}$, reflects the order-independence of contact points.

Unfortunately, no optimization tools can solve \prettyref{pb:planning_relaxed} in an off-the-shelf manner due to the special objective function, so that we develope a special implementation of BB outlined in \prettyref{Alg:BB}. In this algorithm, our binary bounding volume hierarchy is a KD-tree, as illustrated in \prettyref{fig:pipeline}de. The most important component for the efficiency of BB algorithm is the problem relaxation (\prettyref{ln:relax}) as follows:
\begin{align}
\label{pb:planning_relaxed_relaxation}
\bar{Q}_{SM}^{curr}=\fmax{\bar{Q}_{SM}}\; &\bar{Q}_{SM}\quad\ST\;\|\ff_i\|^2=0\quad\forall i\notin\mathcal{S},
\end{align}
which can be solving using \prettyref{Alg:LowerBound} by excluding the contact points in $\mathcal{S}$. We summarize our main results below:
\begin{lemma}
\prettyref{Alg:BB} returns the global optimum of \prettyref{pb:planning_relaxed}, which is also the global optimum of \prettyref{pb:planning} by setting: $z_i=\sum_{j=1}^C z_i^j$.
\end{lemma}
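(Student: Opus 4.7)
The plan is to decompose the statement into two independent sub-claims and argue them in order: first, that the reformulation \prettyref{pb:planning_relaxed} has the same optimal value as \prettyref{pb:planning} under the substitution $z_i = \sum_{j=1}^C z_i^j$; second, that \prettyref{Alg:BB} is a branch-and-bound procedure whose relaxation yields a valid upper bound at every node, so that it necessarily terminates at the global optimum.

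For the first sub-claim, I would work directly from the $\mathcal{SOS}_1$ and ordering constraints of \prettyref{pb:planning_relaxed}. Together, these constraints encode the selection of at most $C$ contact points from the $N$ candidates with redundant permutations removed by the chain $z_i^j \leq z_i^{j+1}$, so the cumulative indicator $\tilde z_i := \sum_j z_i^j$ lies in $\{0,1\}$ and satisfies $\sum_i \tilde z_i \leq C$. The force-activation constraint $\|\ff_i\|^2 \leq \sum_j z_i^j M$ then coincides with $\|\ff_i\|^2 \leq \tilde z_i M$, and since $\bar Q_{SM}$ depends on the $z$-variables only through which $\ff_i$ are permitted to be nonzero, any optimum of one problem maps to an optimum of the other of equal value. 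The inverse map is built by assigning each selected index to a distinct level $j$ compatibly with the ordering constraint.

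For the second sub-claim, I would verify the three standard ingredients of a branch-and-bound correctness proof. Branching along the KD-tree partitions the candidate set at every internal node, and $C$ descents to a leaf enumerate every size-$C$ subset of $\{1,\ldots,N\}$, giving exhaustive coverage. The relaxation \prettyref{pb:planning_relaxed_relaxation} enforces $\|\ff_i\|^2 = 0$ only for indices $i \notin \mathcal{S}$ that branching has already excluded, while leaving the remaining forces entirely free; this yields an upper bound on the best $\bar Q_{SM}$ achievable in the subtree rooted at the current node, which in turn justifies the pruning rule and the incumbent-update step.

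The main obstacle, and the step I would prove carefully, is the monotonicity lemma $\bar Q_{SM}(\mathcal{S}_1) \leq \bar Q_{SM}(\mathcal{S}_2)$ whenever $\mathcal{S}_1 \subseteq \mathcal{S}_2$. I would establish it by inspecting the definition of $\bar{\WW}$ and observing that \prettyref{eq:force_balance}, \prettyref{eq:BEM}, and \prettyref{eq:gg_from_w} are linear in the $\ff_i$ while \prettyref{eq:safe_condition_discrete} is symmetric about $\ff_i = 0$; hence setting $\ff_i = 0$ for $i \in \mathcal{S}_2 \setminus \mathcal{S}_1$ converts any feasible witness for $\mathcal{S}_1$ into one for $\mathcal{S}_2$, so $\bar{\WW}(\mathcal{S}_1) \subseteq \bar{\WW}(\mathcal{S}_2)$. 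With this monotonicity in hand, validity of the relaxation bound is immediate and the rest of the correctness argument reduces to the classical branch-and-bound template of \cite{clausen1999branch}.
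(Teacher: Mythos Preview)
The paper states this lemma without proof; it is offered as a summary of the preceding informal discussion (the KD-tree/$\mathcal{SOS}_1$ encoding and the relaxation \prettyref{pb:planning_relaxed_relaxation}). There is therefore no ``paper's proof'' to compare against, and your proposal effectively supplies the argument the authors omitted. Your two-part decomposition---equivalence of \prettyref{pb:planning} and \prettyref{pb:planning_relaxed} at the level of optimal value, then validity of the branch-and-bound relaxation via the monotonicity $\mathcal{S}_1\subseteq\mathcal{S}_2\Rightarrow\bar{\WW}(\mathcal{S}_1)\subseteq\bar{\WW}(\mathcal{S}_2)$---is exactly the natural route and matches the intuition the paper sketches in prose.

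Two small points are worth tightening. First, your claim that $\tilde z_i:=\sum_j z_i^j\in\{0,1\}$ does not hold for every feasible point of \prettyref{pb:planning_relaxed}: the $z_i^j$ are declared continuous in $[0,1]$, and $\mathcal{SOS}_1$ alone does not force integrality. What is true---and sufficient---is that the objective $\bar Q_{SM}$ depends on the $z$-variables only through the activation set $\{i:\sum_j z_i^j>0\}$, which by $\mathcal{SOS}_1$ has cardinality at most $C$; hence the optimal \emph{value} is attained at a binary point, and the bijection with feasible points of \prettyref{pb:planning} goes through. Second, the symmetry-breaking constraint $z_i^j\leq z_i^{j+1}$ as literally written in the paper is in tension with $\mathcal{SOS}_1$ when two distinct indices are selected in adjacent slots; the algorithm's validity check (pruning when every $a\in\KDNode^j$ exceeds every $b\in\KDNode^{j+1}$) reveals that the intended constraint is an ordering on the \emph{selected indices}, not on the $z$-values themselves. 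Your treatment of it as pure symmetry-breaking is the right reading.

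Finally, note that the pseudocode of \prettyref{Alg:BB} initializes $\bar Q_{SM}^{best}\gets\infty$ and prunes when $\bar Q_{SM}^{curr}\geq\bar Q_{SM}^{best}$, which is the wrong direction for a maximization problem with an upper-bounding relaxation. Your appeal to the ``classical branch-and-bound template of \cite{clausen1999branch}'' implicitly assumes the corrected inequalities; it would be worth stating explicitly that the lemma holds for the obvious fix ($\bar Q_{SM}^{best}\gets-\infty$, prune when $\bar Q_{SM}^{curr}\leq\bar Q_{SM}^{best}$), since as printed the algorithm does not return the optimum.
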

Finally, note that all the results in this section hold if we replace $Q_{SM}$ with $Q_1$.
\section{\label{sec:results}Evaluations}
We implement our algorithms for computing $Q_{SM}$ and perform grasp planning using C++. The accuracy of BEM heavily relies on the quality of the surface triangle mesh, so that we first optimize the mesh quality using CGAL \cite{cgal:rty-m3-19a}. We implement the BEM using kernel independent numerical integration scheme \cite{journals/aes/FialaR14}. The most computationally costly step in BEM is the inversion of system matrices, for which we use LU-factorization accelerated by H-matrices \cite{hackbusch1999sparse}. Finally, we use CGAL \cite{cgal:hs-chdt3-19a} to construct convex hulls with exact arithmetics. All the experiments are performed on a single desktop machine with two Xeon E5-2697 CPU and 256Gb memory. 

\TE{Parameter Choices:} Computing $Q_{SM}$ requires more parameters than $Q_1$. Specifically, there are three additional variables: tensile stress $\stress_{max}$ and Lam\'e material parameters: $\mu,\lambda$. However, if we transform $\mu,\lambda$ to an equivalent set of parameters: Young's modulus $E$ and Poisson ratio $\nu$ \cite{hughes2012finite}, it is obvious to show that $Q_{SM}$ is proportional to $\stress_{max}$ and inversely proportional to $E$. Since the absolute value of a grasp metric is meaningless for grasp planning and only the relative value matters, we can always set $\stress_{max}=E=1$ and choose only $\nu$ according to the material type of the target object, and then setting:
\begin{align*}
\mu=\frac{1}{2(1+\nu)}\quad\lambda=\frac{\nu}{(1+\nu)(1-2\nu)}.
\end{align*}
We have $\nu=0.33$ for copper and $\nu=0.499$ for rubber. In all experiments, we set $\nu=0.33$. Finally, when running \prettyref{Alg:LowerBound}, we set $\epsilon=0.001$.

\begin{figure}[ht]
\centering
\includegraphics[width=0.49\textwidth]{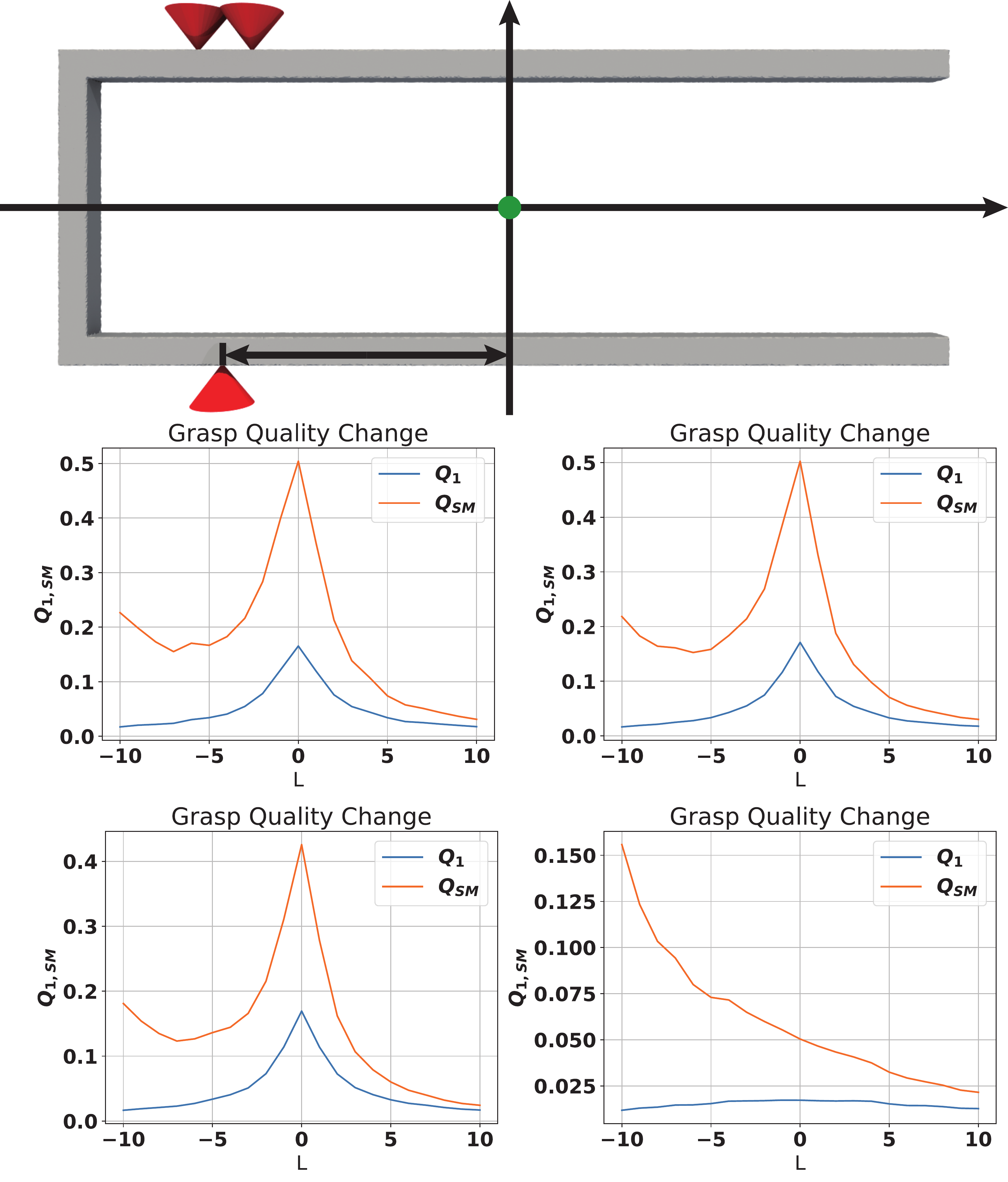}
\put(-160,210){$L$}
\put(-10 ,245){$X$}
\put(-115,283){$Y$}
\put(-115,245){\textcolor{darkgreen}{COM}}
\put(-150,150){(a)}
\put(-30 ,150){(b)}
\put(-150,55 ){(c)}
\put(-30 ,55 ){(d)}
\put(-240,-9){\small$\Metric_{abc}=\E{diag}\SIX{1}{1}{1}{1}{1}{1}$}
\put(-135,-9){\small$\Metric_{d}=\E{diag}\SIX{1}{1}{1}{0.01}{0.01}{0.01}$}
\vspace{-0px}
\caption{\label{fig:shapeAwareness} The target object is a U-shaped tuning fork, where $Q_{SM}$ is aware of shapes while $Q_1$ is not. We test a 3-point grasp (frictional cones in red) where the distance to the center of mass (green) is $L$, we plot the change of $Q_{1,SM}$ against $L$ under 4 different conditions. In (a,b,c), we weight forces and torques equally with $\Metric_{abc}$ and use meshes of different resolutions, with $K=5730$ in (a), $K=24204$ in (b), and $K=94398$ in (c). In (d), we use a lower weight for torques with $\Metric_{d}$ and use $K=24204$.}
\vspace{-0px}
\end{figure}
\TE{Shape-Awareness:} The most remarkable advantage of $Q_{SM}$ over $Q_1$ is shape awareness. In \prettyref{fig:shapeAwareness}, our target object is a U-shaped tuning fork and we use a 3-point grasp. The shape of the tuning fork is asymmetric along the X-axis, and according to \prettyref{fig:shapeAwareness}abc, $Q_1$ is not aware of the asymmetry, while $Q_{SM}$ correctly reflects the fact that grasping the leftmost point is better than grasping the mid-left point because it is less likely to break the object. However, the best grasp under $Q_1$ and $Q_{SM}$ are the same, i.e., grasping the centroid point. If we change the metric $\Metric$ and emphasize force resistance over torque resistance, then the difference between $Q_1$ and $Q_{SM}$ is more advocated.

\TE{Robustness to Mesh Resolution:} The change of $Q_{SM}$ is not sensitive to the resolution of surface meshes as shown in \prettyref{fig:shapeAwareness}abc, which makes $Q_{SM}$ robust to target objects discretized using small, low-resolution meshes. As we increase $K$ from $5730$ to $24204$ and finally to $94398$, the change of $Q_{SM}$ against $L$ is almost intact, with very small fluctuations around $L=-5$.

\begin{figure}[ht]
\centering
\includegraphics[width=0.4\textwidth]{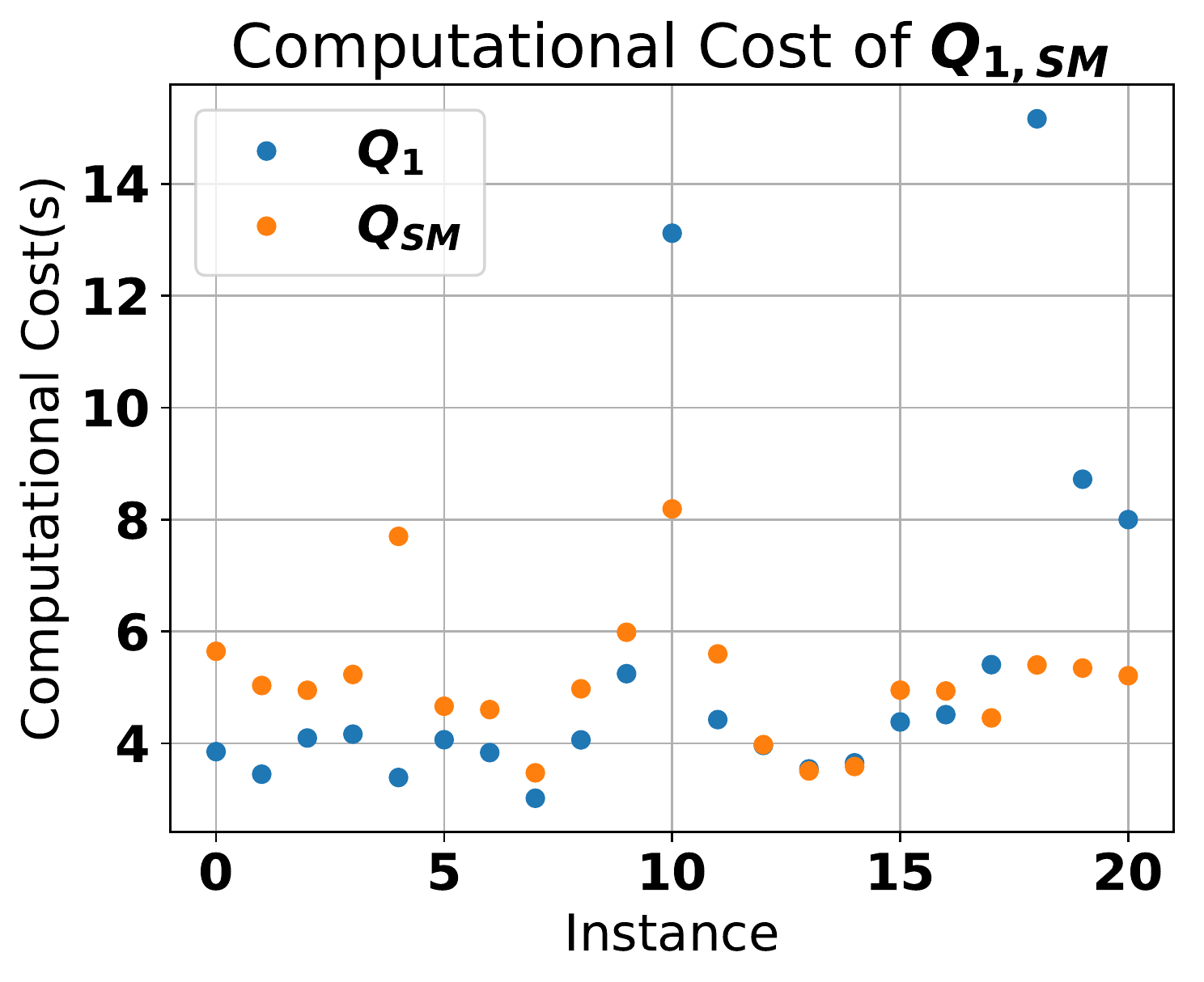}
\vspace{-0px}
\caption{\label{fig:cost} We compare the computational cost of computing $Q_1$ and $Q_{SM}$ for 20 random target objects and grasps.}
\vspace{-0px}
\end{figure}
\TE{Computational Cost:} $Q_{SM}$ does incur a higher computational cost than $Q_1$. The most computational cost lies in the assembly of matrices $\BEMA,\BEMB$ which involves the direct factorization of a large dense matrix. But this assembly is precomputation and required only once for each target object before grasp planning. In \prettyref{fig:shapeAwareness}abc, this step takes $112$s when $K=5730$, $1425$s when $K=24204$, and $3892$s when $K=94398$. After precomputation, the cost of evaluating $Q_{SM}$ and $Q_1$ are very similar, as shown in \prettyref{fig:cost}. This implies that using $Q_{SM}$ does not incur a higher cost in grasp planning. This is largely due to the progressive \prettyref{Alg:progressive}, which greatly reduce the number of constraints in solving \prettyref{pb:support}. Without this method, solving \prettyref{pb:support} is prohibitively costly by requiring the solve of a sparse linear system of size proportional to $K$.

\begin{figure*}[ht]
\centering
\includegraphics[width=0.99\textwidth]{}
\put(-455,212){$Q_1\quad Q_{SM}$}
\put(-330,212){$Q_1\quad Q_{SM}$}
\put(-205,212){$Q_1\quad Q_{SM}$}
\put(-80 ,212){$Q_1\quad Q_{SM}$}
\vspace{-0px}
\caption{\label{fig:plan_results} In the first and second row, we show globally optimal grasps for 8 different target objects under both the $Q_{SM}$ and $Q_1$ metric. These grasps are generated by choosing $C=3$ contact points (frictional cones in red) from $N=100$ potential contact points (green). We choose objects for which optimal grasps are very different under the two metrics. In particular, the optimal grasp for the bunny head object is avoiding the ears of bunny under $Q_{SM}$ metric (black circle). In the third and forth row, we plot the maximal stress configuration (color coded) for results corresponding to the second row. Using $Q_{SM}$ can drastically reduce the maximal stress as indicated in the black circle.}
\vspace{-0px}
\end{figure*}
\TE{Grasp Planning:} In \prettyref{fig:plan_results}, we show globally optimal grasps for 8 different target objects under both the $Q_{SM}$ and $Q_1$ metric. To generate these results, we choose $C=3$ contact points from $N=100$ potential contact points by running \prettyref{Alg:BB}. These contact points are generated using Poisson disk sampling. The computational cost of \prettyref{Alg:BB} is $1.7$hr under $Q_{SM}$ and $0.6$hr under $Q_1$ on average. This result is surprising as the cost of computing $Q_{SM}$ is comparable to that of computing $Q_1$. We found that $Q_{SM}$ tends to create more local minima so that BB needs to create a larger search tree under $Q_{SM}$. In the third row of \prettyref{fig:plan_results}, we show the maximal stress configuration in $\WW$ and the corresponding stress configuration under $Q_1$ side-by-side. The advantage of $Q_{SM}$ is quite clear which suppresses the stress to resist the same external wrench. For some target objects, the high stress is concentrated in a very small region and we indicate them using black circles.
\section{\label{sec:conclusion}Conclusion and Limitation}
We present SM metric, which reflects the tendency to break a target object. As a result, a grasp maximizing $Q_{SM}$ will minimize the probability of breaking a fragile object. We show that $Q_{SM}$ can be computed using previous methods and its computational costly can be drastically reduced by progressively detecting the active set. Finally, we show that grasp planning under $Q_{SM}$ can be performed using BB algorithms. Our experiments show that $Q_{SM}$ is aware of geometric fragility while $Q_1$ is not. We also show that using $Q_{SM}$ does not increase computationally cost in grasp planning.

The major limitation of our work is that computing $Q_{SM}$ requires a costly precomputation step for solving the BEM problem. In addition, the BEM problem requires high quality watertight surface meshes of target objects. An avenue of future research is to infer the value of $Q_{SM}$ for given unknown objects using machine learning, as is done in \cite{inproceedingsDexNetTwo}.
\vspace{-5px}
\bibliographystyle{IEEEtranS}
\bibliography{reference}
\begin{algorithm}[ht]
\caption{\label{Alg:UpperBound} Compute upper bound of $\bar{Q}_{SM}$ using \cite{schulman2017grasping}}
\begin{algorithmic}[1]
\State sample directions $\dd_{1,\cdots,D}$ in $\E{SO}(3)$
\For{$i=1,\cdots,D$}
\State Solve \prettyref{pb:support} with $\dd\gets\dd_i$ for $\ww_i$
\EndFor
\State Return $\fmin{i}\{\ww_i^T\sqrt{\Metric}\dd_i\}$
\end{algorithmic}
\end{algorithm}

\begin{algorithm}[ht]
\caption{\label{Alg:LowerBound} Compute lower bound of $\bar{Q}_{SM}$ using \cite{zheng2012efficient}}
\begin{algorithmic}[1]
\State sample initial directions $\dd_{1,\cdots,D}$ in $\E{SO}(3)$
\For{$i=1,\cdots,D$}
\State Solve \prettyref{pb:support} with $\dd\gets\dd_i$ for $\ww_i$.
\EndFor
\State $\CC_0\gets\hull{\ww_{1,\cdots,D}}$
\State Solve \prettyref{pb:lower_bound} with $\CC\gets\CC_0$ for $\bar{Q}_{SM}^0$
\State Store the blocking face normal on $\CC_0$ as $\dd_0$
\While{$k=1,\cdots$}
\State Solve \prettyref{pb:support} with $\dd\gets\dd_{k-1}$ for $\ww_k$
\State $\CC_k\gets\hull{\CC_{k-1}\cup\{\ww_k\}}$
\State Solve \prettyref{pb:lower_bound} with $\CC\gets\CC_k$ for $\bar{Q}_{SM}^k$
\State Store the blocking face normal on $\CC_k$ as $\dd_k$
\If{$|\bar{Q}_{SM}^k-\bar{Q}_{SM}^{k-1}|<\epsilon$}
\State return $\bar{Q}_{SM}^k$
\EndIf
\EndWhile
\end{algorithmic}
\end{algorithm}

\begin{algorithm}[ht]
\caption{\label{Alg:progressive} Progressive solve of \prettyref{pb:support}}
\begin{algorithmic}[1]
\State $\SSS\gets\KK$
\While{$\SSS\neq\{1,\cdots,K\}$}
\State Solve \prettyref{pb:support_cons} for $\ww,\ff_i,\stress(\xx_j)$
\State Pick $j^*$ using \prettyref{eq:most_violated}
\If{$\sqrt{\|\stress(\xx_{j^*})\stress(\xx_{j^*})\|_2}<1$}
\State Return $\ww,\ff_i,\stress(\xx_j)$
\Else 
\State $\SSS\gets\SSS\cup\{j^*\}$
\EndIf
\EndWhile
\State Return $\ww,\ff_i,\stress(\xx_j)$
\end{algorithmic}
\end{algorithm}

\begin{algorithm}[ht]
\caption{\label{Alg:BB} BB algorithm solving \prettyref{pb:planning_relaxed}}
\begin{algorithmic}[1]
\LineComment{Each node of the KD-tree is denoted as $\KDNode$}
\LineComment{Each $\KDNode$ is a set and the root $\KDNode_{r}=\{1,\cdots,N\}$}
\LineComment{Each internal $\KDNode$ has two children $\KDNode_{l}\cup\KDNode_{r}=\KDNode$}
\LineComment{Each $\KDNode$ with only one element is a leaf node}
\State Build a KD-tree for $N$ contact points
\LineComment{Initialize BB search tree as a queue}
\LineComment{Each node in the search tree is a $C$-tuple}
\LineComment{Each element in the tuple is a node $\KDNode$}
\State $\E{queue}\gets\emptyset$
\LineComment{Initialize search from the root}
\State $\E{queue.insert(}<\KDNode_r,\cdots,\KDNode_r>\E{)}$
\LineComment{Record the best solution so far}
\State $\E{best}\gets\emptyset$ and $\bar{Q}_{SM}^{best}\gets\infty$
\While{$\E{queue}\neq\emptyset$}
\LineComment{We use subscript to index KD-tree}
\LineComment{We use superscript to index contact points}
\State $<\KDNode^1,\cdots,\KDNode^C>\gets\E{queue.pop()}$
\State Solve \prettyref{pb:planning_relaxed_relaxation} with $\mathcal{S}\gets\KDNode^1\cup\KDNode^2\cdots\cup\KDNode^C$\label{ln:relax}
\If{$\bar{Q}_{SM}^{curr} \geq \bar{Q}_{SM}^{best}$}
\LineComment{Bound: Stop search early}
\State continue
\Else
\LineComment{Check for the last condition in \prettyref{pb:planning_relaxed}}
\State $\E{valid}\gets True$
\For{$j=1,\cdots,C-1$}
\If{$\forall a\in\KDNode^j,\;b\in\KDNode^{j+1}$ we have $a>b$}
\State $\E{valid}\gets False$
\EndIf
\EndFor
\If{not $\E{valid}$}
\State continue
\EndIf
\LineComment{Branch: Desend the KD-tree}
\State $\E{isLeaf}\gets True$
\For{$j=1,\cdots,C$}
\If{$\KDNode^j$ is not leaf node}
\State $\E{isLeaf}\gets False$
\State $\E{queue.insert(}<\KDNode^1,\cdots,\KDNode_l^j,\cdots,\KDNode^C>\E{)}$
\State $\E{queue.insert(}<\KDNode^1,\cdots,\KDNode_r^j,\cdots,\KDNode^C>\E{)}$
\State break
\EndIf
\EndFor
\LineComment{Found a feasible solution}
\If{$\E{isLeaf}$}
\State $\E{best}\gets<\KDNode^1,\cdots,\KDNode^C>$ and $\bar{Q}_{SM}^{best}\gets 
\bar{Q}_{SM}^{curr}$
\EndIf
\EndIf
\EndWhile
\For{$j=1,\cdots,C$ and $i=1,\cdots,N$}
\If{$i\in\KDNode^j$}
\State $z_i^j\gets1$
\Else
\State $z_i^j\gets0$
\EndIf
\EndFor
\end{algorithmic}
\end{algorithm}
\appendix[\label{appen:BEM}The Boundary Element Method]
In this section, we summarize the boundary element discretization of \prettyref{eq:elasticity} and the definition of $\BEMA_i,\BEMB_i$. In addition, we derive the special form of BEM with our body force and external traction distribution. We follow \cite{steinbach2007numerical} with minor changes. First, we define a set of notations and useful theorems. For any $3\times3$ matrix such as $\stress$, we have:
\begin{align*}
\nabla\cdot\stress=\THREEC{\nabla\cdot\stress_x}{\nabla\cdot\stress_y}{\nabla\cdot\stress_z}.
\end{align*}
For any $3$ vector such as $\uu$, we have:
\begin{align*}
&\Delta\uu=\nabla\cdot(\nabla\uu^T)=\THREEC{\Delta\uu_x}{\Delta\uu_y}{\Delta\uu_z}  \\
&\nabla\cdot\nabla\uu=\nabla\nabla\cdot\uu=\nabla\cdot(\tr{\nabla\uu}\id).
\end{align*}

\subsection{Elastostatic Equation in Operator Form}
We first derive the operator form of the elastostatic problem. By combining the three equations in \prettyref{eq:elasticity}, we have:
\begin{equation}
\begin{aligned}
\label{eq:equilibrium}
0&=\nabla\cdot(\mu\nabla\uu+\mu\nabla\uu^T+\lambda\tr{\nabla\uu}\id)+\GG    \\
 &=-\LL[\uu]+\GG=\nabla\cdot\stress+\GG  \\
\LL[]
 &\triangleq-(\mu+\lambda)\nabla\nabla\cdotp[]-\mu\Delta[].
\end{aligned}
\end{equation}

\subsection{Boundary Integral Equation (BIE)}
Next, we derive BIE via the divergence theorem:
\begin{align*}
 &\int_\Omega \vv^T\GG d\pxx
= \int_\Omega \vv^T\LL[\uu] d\xx
=-\int_\Omega \vv^T\nabla\cdot\stress d\xx \\
=&-\int_\Omega \nabla\cdot(\stress\vv) d\xx + \int_\Omega \tr{\nabla\vv\stress} d\xx    \\
=&-\int_{\partial\Omega} \vv^T\stress\nn d\pxx + \E{Sym}(\uu,\vv),
\end{align*}
where $\E{Sym}(\uu,\vv)$ denotes a symmetric term in $\uu,\vv$ satisfying: $\E{Sym}(\uu,\vv)=\E{Sym}(\vv,\uu)$. We then swap $\uu,\vv$ and subtract the two equations to get:
\begin{align}
\label{eq:BIE}
&\int_\Omega (\uu^T\LL[\vv]-\vv^T\LL[\uu]) d\xx=
\int_{\partial\Omega} (\vv^T\NN[\uu]-\uu^T\NN[\vv]) d\pxx\nonumber  \\
&\NN[]\triangleq \stress[]\nn=
\mu\nabla[]\nn+\mu\nabla[]^T\nn+\lambda\nabla\cdot[]\nn.
\end{align}
\subsection{Fundamental Solution}
If $\vv$ is the fundamental solution centered at $\tilde{\xx}$, which is denoted by $U(\xx-\tilde{\xx})$, then we have the boundary integral equation by plugging $U$ into \prettyref{eq:BIE}:
\begin{align}
\label{eq:BIE_FS}
&\int_\Omega U^T\GG d\xx+\int_{\partial\Omega} (U^T\NN[\uu]-\NN[U]^T\uu) d\pxx=\uu(\tilde{\xx}),
\end{align}
where the fundamental solution satisfying:
\begin{align*}
\int_\Omega \LL[U(\xx-\tilde{\xx})]\uu(\xx) d\xx=\uu(\tilde{\xx}),
\end{align*}
has the following analytic form:
\begin{align}
\label{eq:U_form}
U(\xx-\tilde{\xx})=&\frac{1}{8\pi\mu}
\left[\Delta r\id-\frac{\lambda+\mu}{\lambda+2\mu}\nabla^2 r\right]  \\
=&\frac{1}{16\pi\mu(1-\nu)r}\left[(3-4\nu)\id + \nabla r\nabla r^T\right]\nonumber   \\
&r\triangleq|\xx-\tilde{\xx}|\quad \nu=\frac{\lambda}{2(\lambda+\mu)}.\nonumber
\end{align}

\subsection{Body Force Term}
\prettyref{eq:BIE_FS} still involves a volume integral but we can reduce that to surface integral by using the special form of body force: $\GG(\xx)=\GG_0+\nabla\GG\xx$ and the Galerkin vector form of the fundamental solution (\prettyref{eq:U_form}). The body force term involves to two basic terms. The first one is:
\begin{align*}
 \int_\Omega \Delta r\GG d\xx
=&\int_\Omega \nabla\cdot(\nabla r\GG^T - r\nabla\GG^T) d\xx   \\
=&\int_{\partial\Omega} \left[\GG\nabla r^T - r\nabla\GG\right]\nn d\pxx.
\end{align*}
The second one is:
\begin{align*}
\int_\Omega \nabla^2 r\GG d\xx  
=&\int_\Omega \nabla\cdot(\GG\nabla r^T - \tr{\nabla\GG}r\id) d\xx   \\
=&\int_{\partial\Omega} \left[\nabla r\GG^T - \tr{\nabla\GG}r\id\right]\nn d\pxx.
\end{align*}
Plugging these two terms into \prettyref{eq:BIE_FS} and we get:
\begin{align}
\label{eq:body_force}
&\int_\Omega U^T\GG d\xx =\int_{\partial\Omega} \GGG\nn d\pxx   \\
&\GGG \triangleq\frac{1}{8\pi\mu}
\left[(\GG\nabla r^T-r\nabla\GG)-\frac{\lambda+\mu}{\lambda+2\mu}(\nabla r\GG^T-\tr{\nabla\GG}r\id)\right].\nonumber
\end{align}
\subsection{Singular Integrals}
At this step all the terms in \prettyref{eq:BIE} have been transformed into boundary integrals. However, $\xx$ in this form must be interior to $\Omega$. In this section, we take the limit of $\xx$ to $\partial\Omega$ and derive the Cauchy principle value of singular integral terms. 

The first integral in \prettyref{eq:BIE}, or the body force term in \prettyref{eq:body_force}, has removable singularity so that we can use numerical techniques to integrate them directly. The second term in \prettyref{eq:BIE} takes a special form due to our Dirac external force distribution in \prettyref{eq:elasticity}:
\begin{align*}
 &\int_{\partial\Omega} U^T\NN[\uu] d\pxx 
= -\int_{\partial\Omega} U^T\sum_{i=1}^N\delta(\xx-\xx_i)\ff_i d\pxx    \\
=&-\sum_{i=1}^NU(\xx_i-\tilde{\xx}^*)\ff_i.
\end{align*}
which is also non-singular. The third term in \prettyref{eq:BIE} is singular whose value must be determined:
\begin{align*}
 &\lmt\int_{\partial\Omega} \NN[U]^T\uu d\pxx \\
=&\lmt\int_{\partial\Omega-B(\epsilon)} \NN[U]^T\uu d\pxx+
  \lmt\int_{\partial\Omega\cap B(\epsilon)} \NN[U]^T\uu d\pxx,
\end{align*}
where we assume that $\|\tilde{\xx}-\tilde{\xx}^*\|=\epsilon$, $\tilde{\xx}^*\in\partial\Omega$, and $B(\epsilon)$ is the sphere centered at $\tilde{\xx}^*$ with radius $\epsilon$. We evaluate the two terms separately. For the first term, we have:
\begin{align*}
 &\lmt\int_{\partial\Omega-B(\epsilon)} \NN[U]^T\uu d\pxx\triangleq\DD[\uu]   \\
=&\int_{\partial\Omega} [2\mu(\MMM[U])^T\uu+(\MMM[\frac{1}{4\pi r}])\uu+\nn^T\nabla[\frac{1}{4\pi r}]\uu] d\pxx   \\
=&\int_{\partial\Omega} [2\mu U\MMM[\uu]-\frac{1}{4\pi r}\MMM[\uu]+\nn^T\nabla[\frac{1}{4\pi r}]\uu] d\pxx    \\
\MMM[]\triangleq&\nabla[]\nn^T-\nn\nabla[]^T,
\end{align*}
which is known as double layer potential and has only removable singularities. To evaluate the second term, we use the following identity:
\begin{align}
\label{eq:integral_free}
 &\lmt\int_{\partial\Omega\cap B(\epsilon)} \NN[U]^T\uu d\pxx   \\
=&\lmt\int_{\partial(\Omega\cap B(\epsilon))} \NN[U]^T\uu d\pxx-
  \lmt\int_{\partial B(\epsilon)\cap \Omega} \NN[U]^T\uu d\pxx,\nonumber
\end{align}
Again, we break this into two terms. The first term in \prettyref{eq:integral_free} is easy to evaluate using the divergence theorem:
\begin{align*}
 &\lmt\int_{\partial(\Omega\cap B(\epsilon))} \NN[U_x]^T\uu d\pxx \\
=&\lmt\int_{\partial(\Omega\cap B(\epsilon))} \nn^T\stress(U_x)^T\uu(\tilde{\xx}^*) d\pxx\\
=&-\uu(\tilde{\xx}^*)^T\lmt\int_{\Omega\cap B(\epsilon)} \LL[U_x] d\pxx=-\uu_x(\tilde{\xx}^*).
\end{align*}
The second term in \prettyref{eq:integral_free} is called the integral free term, which evaluates to:
\begin{align*}
 &\lmt\int_{\partial B(\epsilon)\cap \Omega} \NN[U]^T\uu d\pxx\triangleq-\CU\uu    \\
\CU\triangleq&\frac{\phi\id}{4\pi}-\int_{\partial(B(\epsilon)\cap\partial\Omega)} (\xx-\tilde{\xx}^*)\nn^T dl,
\end{align*}
where $\phi$ is the internal solid angle at $\tilde{\xx}^*$.
\subsection{Putting Everything Together}
Plugging all the integrals into \prettyref{eq:BIE_FS} and we have:
\begin{align*}
 \int_{\partial\Omega}\GGG\nn d\pxx 
-\sum_{i=1}^NU(\xx_i-\tilde{\xx}^*)\ff_i
-\DD[\uu]=\CU\uu(\tilde{\xx}^*),
\end{align*}
\begin{figure}[h]
\centering
\vspace{-45px}
\includegraphics[angle=-65,width=0.25\textwidth]{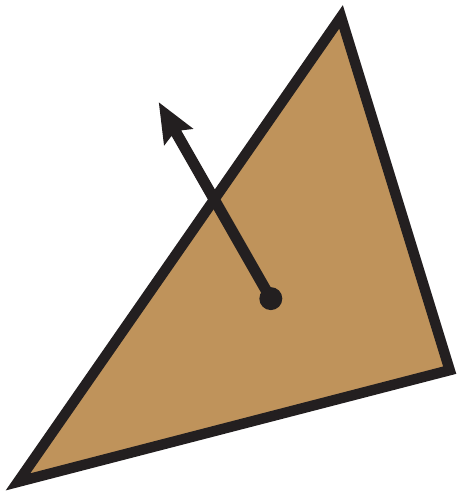}
\put(-85 ,-70){$\xx_j^1$}
\put(-7  ,-24){$\xx_j^2$}
\put(-137,-5 ){$\xx_j^3$}
\put(-40 ,0  ){$\nn_j$}
\vspace{-10px}
\caption{\label{fig:angleConstraint} The $j$th triangle.}
\vspace{-15px}
\end{figure}
which is a dense system allowing us to solve for $\uu$ everywhere on $\partial\Omega$. This system is discretized using Galerkin's method with piecewise linear $\uu$ and piecewise constant $\ff$. All the integrals are evaluated using variable-order Gauss Quadratures. This linear system is denoted by:
\begin{align*}
(\E{D}+\CU)\uu=
\E{A}\FOURC{\GG_0}{{[\nabla\GG]}_x}{{[\nabla\GG]}_y}{{[\nabla\GG]}_z}+
\E{B}\THREEC{\ff_1}{\vdots}{\ff_N},
\end{align*}
where $\E{D}$ is the coefficient matrix of $\DD$, $\E{A}$ is the coefficient matrix of body force terms, and $\E{B}$ is the coefficient matrix of external force terms. After the displacements $\uu$ have been computed, we can recover the stress on $j$th surface triangle by solving the following linear system:
\begin{equation}
\begin{aligned}
\label{eq:stress_recover}
&\nabla\uu_j(\xx_j^2-\xx_j^1)=\uu(\xx_j^2)-\uu(\xx_j^1)    \\
&\nabla\uu_j(\xx_j^3-\xx_j^1)=\uu(\xx_j^3)-\uu(\xx_j^1)    \\
&\mu(\nabla\uu_j+\nabla\uu_j^T)\nn_j+\lambda\tr{\nabla\uu_j}\nn_j=\ff_i \\
&\stress(\xx_j)=\mu(\nabla\uu_j+\nabla\uu_j^T)+\lambda\tr{\nabla\uu_j}\id,
\end{aligned}
\end{equation}
which is 18 linear equations that can be solved for $\nabla\uu_j$ and $\stress(\xx_j)$. This linear system is denoted by:
\begin{align*}
\THREEC
{\stress_x(\xx_j)}
{\stress_y(\xx_j)}
{\stress_z(\xx_j)}=\E{N}_j\uu+\E{M}_j\THREEC{\ff_1}{\vdots}{\ff_N},
\end{align*}
where $\E{N},\E{M}$ are corressponding coefficient matrices in \prettyref{eq:stress_recover}. Combining these two systems and we can define $\BEMA_j,\BEMB_j$ as:
\begin{align*}
\BEMA_j\triangleq&\E{N}_j(\E{D}+\CU)^{-1}\E{A}  \\
\BEMB_j\triangleq&\E{N}_j(\E{D}+\CU)^{-1}\E{B}+\E{M}_j.
\end{align*}
\end{document}